\let\proof\@undefined
\let\endproof\@undefined
\newcommand{\vect}[1]{\boldsymbol{#1}}
\newcommand{\be}{\begin{equation}}
\newcommand{\ee}{\end{equation}}
\newcommand{\Xcal}{\mathcal{X}}
\newcommand{\Acal}{\mathcal{A}}
\newcommand{\Scal}{\mathcal{S}}
\newcommand{\w}{\vect{w}}
\newcommand{\T}{\mathcal{T}}
\newcommand{\f}{\vect{f}}
\newcommand{\clin}{c^{\mathtt{sum}}}
\newcommand{\ccheb}{c^{\mathtt{max}}}
\newcommand{\ie}{\textit{i.e.,}~}
\newcommand{\grey}[1]{\textcolor{gray}{#1}}
\theoremstyle{definition}
\newtheorem{problem}{Problem}
\newtheorem{definition}{Definition}
\newtheorem{proposition}{Proposition}
\newtheorem*{remark*}{Remark}
\newtheorem{lemma}{Lemma}
\title{\LARGE \bf
Scalarizing Multi-Objective Robot Planning Problems \\ using Weighted Maximization}
\author{Nils Wilde, Stephen L.~Smith, and Javier Alonso-Mora
\thanks{This research is supported by the European Union's Horizon 2020 research and innovation program under Grant 101017008.}
\thanks{N.~Wilde and J.~Alonso-Mora are with the Department for Cognitive Robotics, 3ME,
Delft University of Technology, Delft, Netherlands, 
\texttt{\{n.wilde, j.alonsomora\}@tudelft.nl}.
S.~L.~Smith is with the Department for Electrical and Computer Engineering, University of Waterloo, Waterloo, Canada,
\texttt{stephen.smith@uwaterloo.ca}
}%
}
\begin{document}

\maketitle
\thispagestyle{empty}
\pagestyle{empty}

\begin{abstract}
When designing a motion planner for autonomous robots there are usually multiple objectives to be considered. However, a cost function that yields the desired trade-off between objectives is not easily obtainable. A common technique across many applications is to use a weighted sum of relevant objective functions and then carefully adapt the weights. However, this approach may not find all relevant trade-offs even in simple planning problems. Thus, we study an alternative method based on a weighted maximum of objectives. Such a cost function is more expressive than the weighted sum, and we show how it can be deployed in both continuous- and discrete-space motion planning problems.
We propose a novel path planning algorithm for the proposed cost function and establish its correctness, and present heuristic adaptations that yield a practical runtime.
In extensive simulation experiments, we demonstrate that the proposed cost function and algorithm are able to find a wider range of trade-offs between objectives (\textit{i.e.,} Pareto-optimal solutions)
for various planning problems, showcasing its advantages in practice. 
\end{abstract}

\section{Introduction}
\begin{figure}[ht]
    \centering
    \begin{subfigure}[!ht]{0.4\textwidth}
        \centering
        \includegraphics[width=1\textwidth]{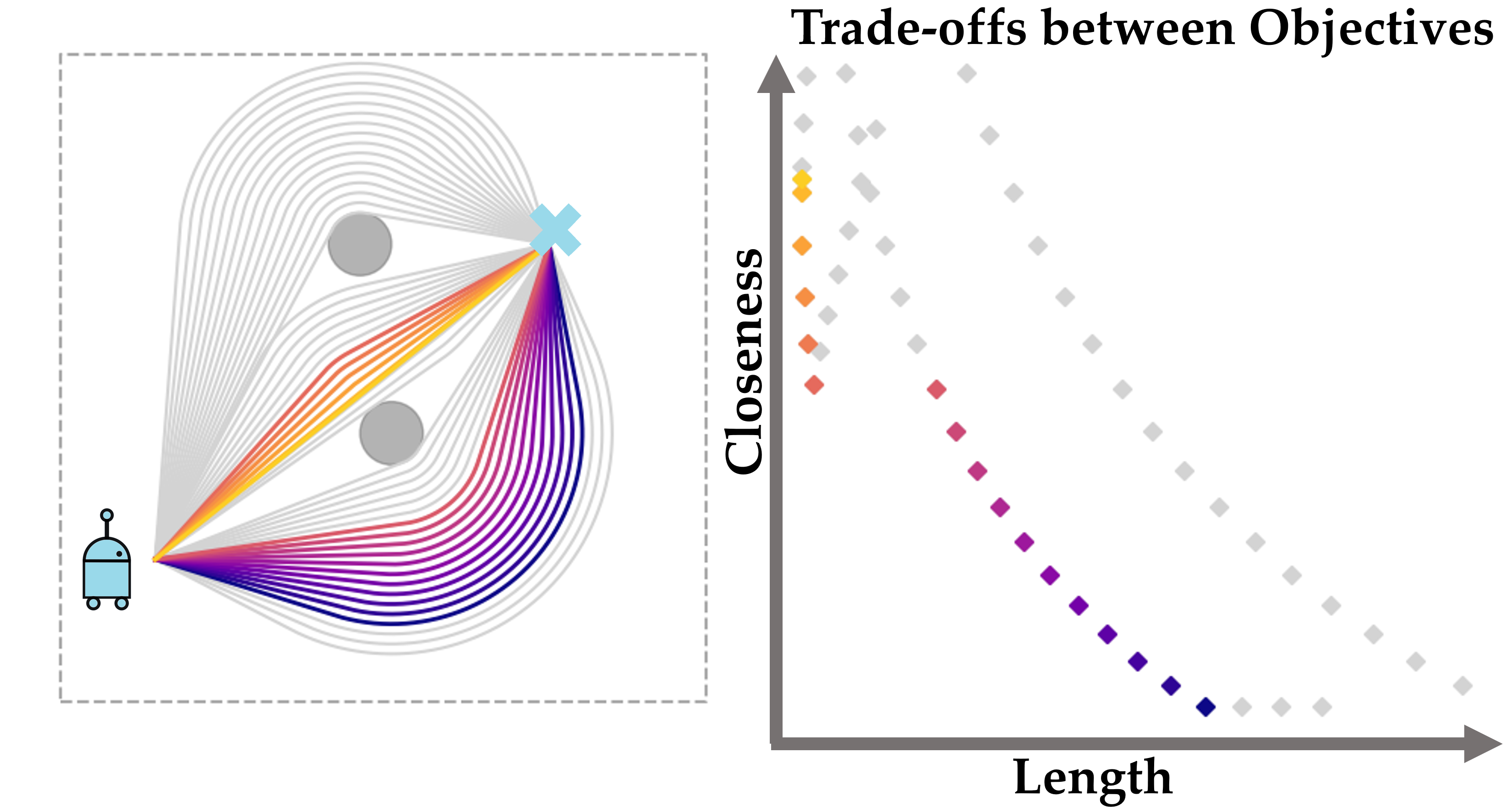}
        \caption{Ground set of feasible trajectories. Pareto-optimal solutions are highlighted in colour.}
        \label{fig:intro_all}
    \end{subfigure}
    \hfill
    \begin{subfigure}[t]{0.43\textwidth}
        \centering
        \includegraphics[width=1\textwidth]{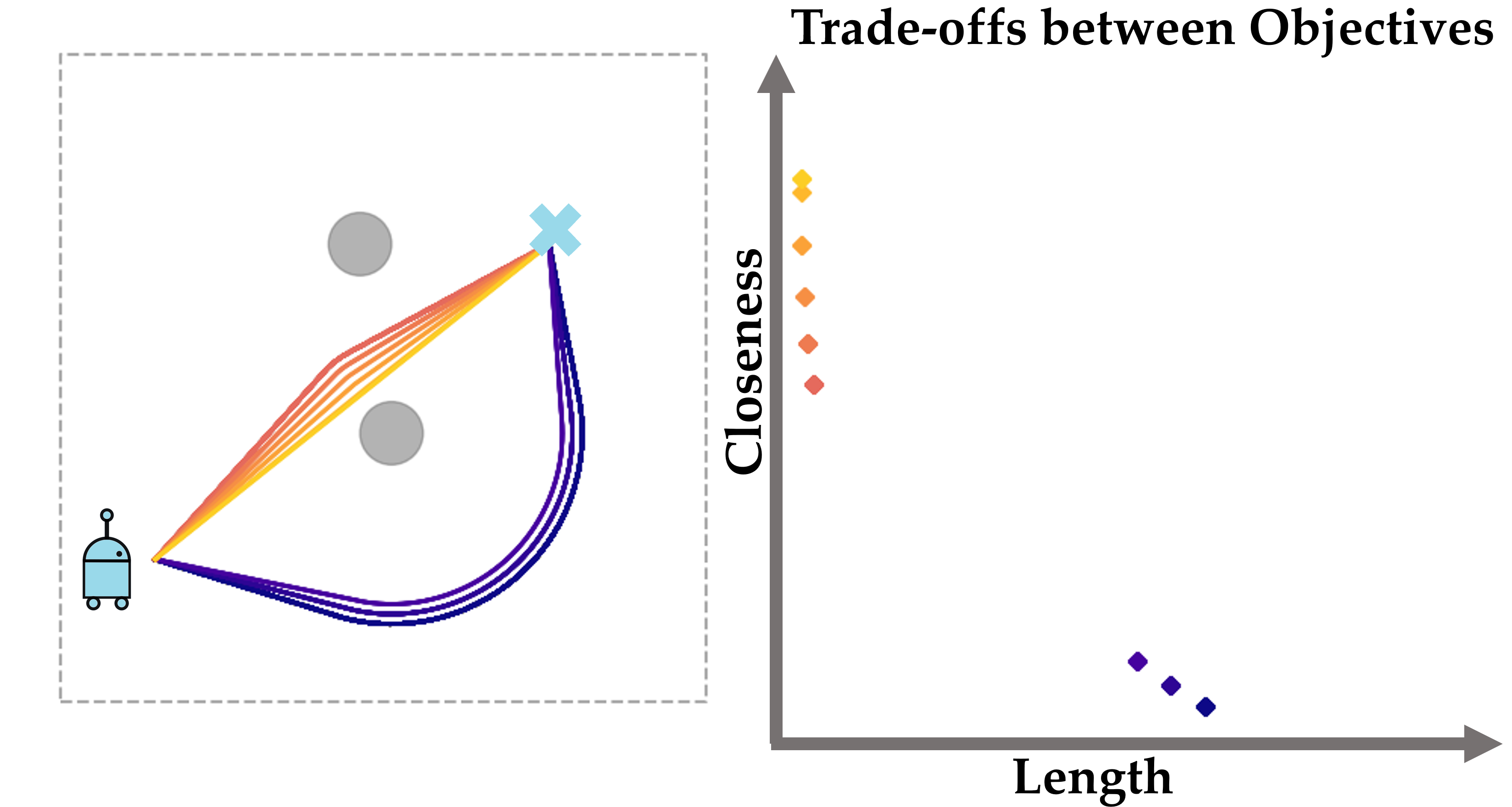}
        \caption{Attainable solutions for weighted sum (WS) optimization.}
        \label{fig:intro_lin}
    \end{subfigure}
    \hfill
    \begin{subfigure}[t]{0.43\textwidth}
        \centering
        \includegraphics[width=1\textwidth]{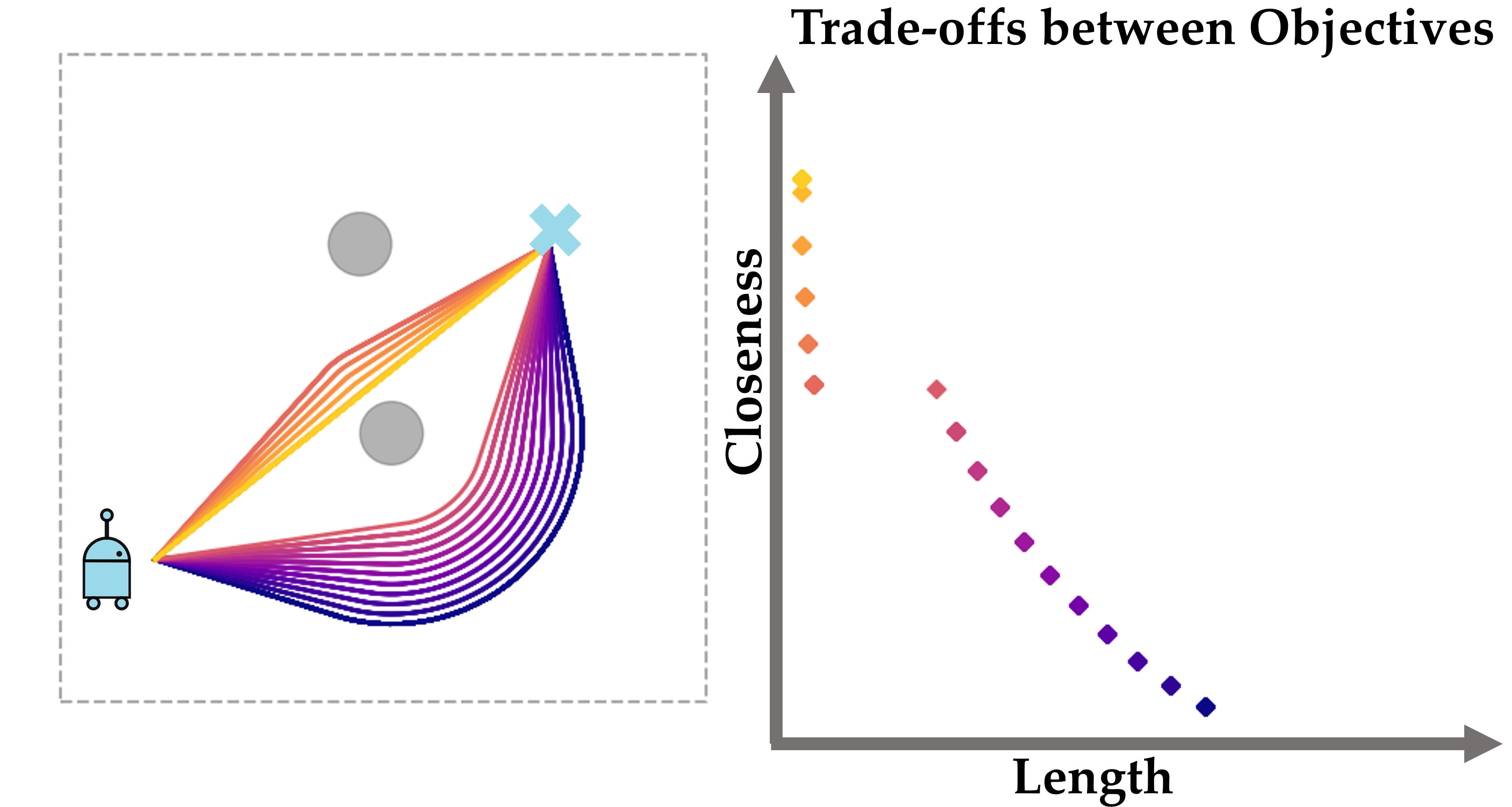}
        \caption{Attainable solutions for the proposed weighted maximum (WM) optimization.}
        \label{fig:intro_cheb}
    \end{subfigure}
    \caption{ 
    A simple planning problem with two objectives: trajectory length and the minimum distance to an obstacle. Shown are the ground set of trajectories (a), the solutions when using a weighted sum objective (b), and the solutions when using the proposed weighted maximum objective (c). In each subfigure the left plot shows the set of trajectories and the right plot shows the corresponding trade-offs. Colored trade-offs are Pareto-optimal, \textit{i.e.,} show the Pareto-front.}
    \label{fig:intro}
\end{figure}
Automated planning and decision making plays a central role in designing intelligent robotic systems.
In many real-world settings, autonomous robots are faced with complex scenarios that require them to balance between different objectives simultaneously. 
For instance, autonomous vehicles need to navigate to a goal, while ensuring safety, passenger comfort and ideally fuel-efficiency \cite{levinson2011towards,christianos2023planning, botros2022tunable}. Similarly, mobile robots navigating in human-centered spaces such as offices, hospitals or public areas need to consider task efficiency and conforming to social norms \cite{wilde2020improving, biswas2022socnavbench}.

In multi-objective optimization (MOO) problems -- such as finding trajectories trading off different objectives -- the optimal solution is usually not unique, but rather there is a set of \textit{Pareto-optimal} solutions. A solution is Pareto-optimal when none of the individual objectives can be improved without worsening at least one other objective. 
Thus, an important challenge in motion planning remains the design of objective functions that balance between several potentially competing objectives and allow for computing Pareto-optimal solutions. A common approach is to formulate a weighted sum of the objective functions \cite{wilde2020improving, zucker2013chomp, christianos2023planning,lu2020motion}.
Often, the weights on the objectives are tuning-parameters, requiring careful calibration. 
In human-robot interaction (HRI) user preferences for robot behaviour is commonly modelled as a weighted sum of features \cite{abbeel2004apprenticeship, hadfield2016cooperative, sadigh2017active, biyik2019asking, wilde2020improving,habibian2022here}.
The linear structure allows for designing efficient algorithms for both motion planning and learning from human feedback making this approach very popular.

However, it can fail to describe all optimal trade-offs  since it is unable to explore non-convex regions of Pareto-fronts. This issue persists even in simple planning problems, as illustrated in Figure \ref{fig:intro}. Here we compute trajectories between a fixed start and goal position around two static obstacles. To optimize for trajectory length and the minimal distance to an obstacle we can inflate the obstacles and then plan paths on visibility graphs.
Subplot \ref{fig:intro_all} shows the ground set of feasible trajectories together with the corresponding Pareto-front. Subplot \ref{fig:intro_lin} shows the trajectories that can be computed with the weighted sum (WS) method for different weights.
We observe that the trajectories found by the WS method are only a small subset of the Pareto-optimal solutions from \ref{fig:intro_all}:
While there are numerous trajectories available that pass between the two obstacles, there are only few that go around. 
It is important to notice that this is not due to the choice or resolution of the weights. Rather, \emph{there does not exist any tuning of weights} such that the motion planner returns a more intermediate trade-off, since parts of the Pareto-front are non-convex \cite{branke2008multiobjective}.
{
This problem occurs when parts of the Pareto-front are non-convex: the solutions of the weighted sum method do only cover the convex hull of the Pareto-front \cite{branke2008multiobjective}.
}

We study an alternative form of scalar objective function, the \textit{weighted maximum} (WM) of objectives, also known as Chebyshev scalarization \cite{branke2008multiobjective}. 
This allows for finding a richer set of trade-offs between the two objectives as shown in subplot \ref{fig:intro_cheb}, covering all parts of the Pareto-front.
Indeed, this approach is able to find all Pareto-optimal solutions, \textit{i.e.,} is \textit{Pareto-complete} \cite{branke2008multiobjective}.
Despite the theoretical foundations established in the optimization literature, more expressive scalarization methods such as the WM have not found much attention in robot motion planning. To demonstrate the potential of WM optimization in robot motion planning we discuss fundamental shortcomings of widely used WS cost functions, independent of how weights are selected. We revisit established results from optimization to describe theoretical differences between WS and WM cost functions: WM cost is a provably more expressive tool for motion planning, yet only requires the same number of parameters. 
{We show how WM costs can be used in continuous and discrete space planning problems. For discrete planning, we consider a general monotonic utility function to combine objective values of discrete actions (\textit{e.g.,} edges in graphs), allowing for complex planning.}

\subsection{Contributions}

Our contributions are as follows: 
{First, we consider continuous-space planning problems and show how existing optimization techniques can be used to solve planning problems with WM cost functions.
Further, show NP-hardness of graph based path planning with a WM cost.}
Second, we present a novel optimal path planning algorithm for the WM cost and establish its correctness. Further, we show how our algorithm can be enhanced with a cost-to-go heuristic and discuss a budgeted suboptimal version that runs in polynomial time.
Third, in a series of simulations, we demonstrate that the proposed WM method finds a substantially richer set of trade-offs in various motion planning problems, and showcase that the proposed graph search finds optimal solutions within a practical runtime.

\subsection{Related work}
Many robot planning problems consider multiple, potentially competing objectives. The most prominent approach is to formulate a weighted sum of a given set of objective functions and then solve the resulting scalar optimization problem. This approach is used in trajectory planning for autonomous driving \cite{tunable_traj_planner_urban, levinson2011towards, christianos2023planning, karkusdiffstack, botros2022tunable, zuo2020mpc, hang2020integrated}, local planning in cluttered environments \cite{lu2020motion,de2022risk} or social spaces \cite{che2020efficient, brito2019model}, trajectory generation for manipulators \cite{zucker2013chomp, marcucci2022motion}, and multi-robot planning \cite{luis2020online, cap2018multi}.

Researcher in HRI use weighted sums of objective functions -- usually referred to as features -- to model how users evaluate robot behaviour \cite{abbeel2004apprenticeship, wilde2021learning, hadfield2016cooperative, sadigh2017active, wilde2020active, biyik2019asking,  wilde2022learning,habibian2022here}. In human-in-the-loop learning frameworks users provide feedback to a robot in form of demonstrations, choice, labels, critic, and others, allowing the robot to learn weights for the objective functions and thus adapt their behaviour to the user's preferences.

Different approaches that address the shortcomings of WS objectives include alternative scalar functions such as the WM approach, or incremental algorithms that iteratively explore the entire Pareto-front \cite{branke2008multiobjective}.
A popular approach that can explore non-convex Pareto-fronts is the Adapt Weighted Sum Method \cite{kim2005adaptive, kim2006adaptive}. Here solutions are sampled iteratively using equality constraints to force new samples to close gaps in the Pareto-front. 
Yet, this approach is solving a slightly different problem than we are addressing: The method iteratively creates a set of potential solutions that cover the Pareto-front. Thus, it does not have tuning parameters such that carefully choosing them allows for obtaining the desirable solution. Further, the approach does not come with a completeness guarantee and can get stuck when the Pareto-front is discontinuous. Lastly, satisfying the added equality constraints can be infeasible or computationally hard in practice, especially in discrete-space planning. In our work, we propose using a WM cost function which has tunable weights such that any Pareto-optimal solution can be attained for a specific weight vector. Further, we explicitly address the challenges in discrete space planning and propose a novel graph search for minimizing WM costs for different types of individual objective functions.

Overall, the limitations of WS costs find less discussion in the context of robot planning.
Recently, \cite{thoma2023prioritizing} compared different scalar objective functions. Our work focuses on the WM cost function only, yet provides a theoretical analysis of its expressiveness and 
presents a novel algorithm for discrete space planning.
Other works directly address MOO for specific problems. For instance, the authors of \cite{yi2015morrf} studied weighted sum and weighted minimum approach for exploring Pareto-fronts of sampling based motion planning problems, while \cite{sakcak2021complete} addresses the problem of simultaneously optimizing for path length and clearance in the plane, proposing a complete and efficient algorithm. 
WM cost functions also found attention in multi-objective Reinforcement Learning (RL) \cite{van2013scalarized, chen2019meta}.
{
The works of \cite{ding2014hierarchical} and \cite{bopardikar2015multiobjective} study hierarchical frameworks based on a multi-objective Probabilistic Roadmap (MO-PRM) and explicitly consider two objectives: path length and risk \cite{ding2014hierarchical}, and path length and state-estimation error \cite{bopardikar2015multiobjective}. The MO-PRM separates objectives in primary and secondary costs, and then plans using a discretization of values for the secondary costs, similar to the $\epsilon$-constrained method \cite{branke2008multiobjective}. A drawback of that method is that it depends on the resolution of the constraint on the secondary cost and can require solving multiple optimization problems in order to verify Pareto-optimality.}
In contrast to these works, our paper does not address a specific multi-objective motion planning problem, but rather proposes an alternative to weighted sums for \textit{any} collection of objective functions.

\section{Problem Statement}
In this section, we revisit some preliminary concepts before introducing our formal problem statement.
\subsection{Preliminaries}
\paragraph{Pareto-optimality}
Consider a multi-objective optimization problem where the domain is some vector space $\mathcal{X}$. We want to find a solution $\vect{x}\in \mathcal{X}$ that simultaneously minimizes $n$ different functions, \ie  that solves $\min_{\vect{x}}\{f_1(\vect{x}), \dots, f_n(\vect{x})\}$. In general, the solution to a MOO problem is not a unique vector $\vect{x}$, but a set of \textit{Pareto-optimal} solutions. 
We briefly review the definitions of \textit{dominated solutions} and the Pareto-front.
\begin{definition}[Dominated solution]
    Given a MOO problem and two solutions $\vect{x},\vect{x}'\in\mathcal{X}$. Vector $\vect{x}$ \textit{dominates} $\vect{x}'$ when $f_i(\vect{x})\leq f_i(\vect{x}')$ holds for all $i=1,\dots,n$, and $f_j(\vect{x})< f_j(\vect{x}')$ holds for at least one $j$ where $1\leq j\leq n$. 
\end{definition}

\begin{definition}[Pareto-front]
    Given a MOO problem, the set of \textit{Pareto-optimal} solutions -- called \textit{Pareto-front} -- is the subset of all solution that are not \textit{dominated} by another solution.
\end{definition}

\paragraph{Graph theory}
Following \cite{korte2011combinatorial}, a graph is a tuple $G=(V,E)$ where $V$ are vertices and $E$ is a set of edges. In a weighted graph $G=(V,E,d)$ edges are associated with some cost $d:E\to \mathbb{R}$.
A walk is a sequence $v_1, e_1, v_2,\dots, v_k,e_k,v_{k+1}$ such that $e_i=(v_i,v_{i+1})\in E$ and $e_i\neq e_j$ for $i,j=1,\dots,k$. 
{We define a path $P$ as a sequence of vertices $(v_1, \dots, v_{k+1})$ with no duplicate entries for which there exists a walk $v_1, e_1, v_2,\dots, v_k,e_k,v_{k+1}$ in $G$.}

\subsection{Problem Formulation}

We consider a planning problem described by a robot's state and action space $(\Xcal, \Acal)$, a start state $x_s$ and a set of goal states $X_g\subset\Xcal$.
Let $\T$ be the set of all feasible trajectories starting at $x_s$ and ending at some state $x_g\in X_g$.
Note that the set $\T$ is typically defined implicitly as set of constraints on the robot's state and actions, such as kinodynamic constraints on motion, or spatial constraints for obstacle avoidance.  We keep this set abstract at this point, but give specific examples in Section~\ref{sec:results}.

To define the desired robot behaviour the designer of a motion planner considers a set objectives to be minimized. Let these objectivess be denoted by
$f_1, \dots, f_n$ where $f_i:\T\to \mathbb{R}_{\geq0}$ for $i=1,\dots,n$.
The optimal solution to the motion planning problem is some trajectory $T^*\in\T$. Assuming that the objectives $f_1, \dots, f_n$ contain all aspects under consideration, $T^*$ is a Pareto-optimal solution to the problem
\be
\min_{T\in \T}\{f_1(T), \dots, f_n(T)\}.
\label{eq:MOO problem}
\ee
Let $\T'\subseteq \T$ denote the set of all Pareto-optimal solutions.
Given above definitions, we can pose our main problem.
\begin{problem}[Parametric single objective planning]
\label{prob:main_prob}

Given state and action space $(\mathcal{X}, \mathcal{A})$, initial state $x_s$ and goal states $X_g$, and objectives $f_1, \dots, f_n$ 
{find an algorithm such that, for any Pareto-optimal solution $T^*\in \mathcal{T}'$, there exists algorithm parameters for which the algorithm returns $T^*$.}
\end{problem}

{Our approach to Problem \ref{prob:main_prob} is writing \eqref{eq:MOO problem} as a scalar function where tuning weights $\w$ define the balance between objectives.
The scalar function needs to be solvable, and for any Pareto-optimal trajectory $T^*\in\T$ there exist a choice of weights such that $T^*$ is the solution to the scalar optimization problem.
}

\section{Approach}

A common approach to tackle Problem \ref{prob:main_prob} is solving the MOO problem from \eqref{eq:MOO problem} via means of linear scalarization, also referred to as the weighted sum (WS) method or cost \cite{branke2008multiobjective}. 
This yields the following cost function
\be
\clin(T) = \sum_{i=1}^n f_i(T) w_i = \f(T)\cdot \w,
\label{eq:lin_cost}
\ee
where $\w\in[0,1]^n$ is a vector of tunable weights.
While this approach has been widely used  and been proven to be effective, its simplicity limits the expressiveness.
In this paper, we offer an alternative model-based approach.
We propose a weighted maximum approach for a scalar cost functions, where the summation is replaced by taking the maximum:
\be
c'(T) = \max_{i=1,\dots,n} f_i(T) w_i.
\label{eq:cheb_cost_raw}
\ee

The cost of a trajectory is now given by the objective that attains the largest value when multiplied by its weight.
That is, a trajectory is evaluated only based on the most prominent weighted objective value, and disregards other objectives.
We notice that when the solution to~\eqref{eq:cheb_cost_raw} is unique, it is Pareto-optimal.  However, if there are multiple solutions, then one is Pareto-optimal, while all others are only \textit{weakly} Pareto-optimal~\cite{branke2008multiobjective}.
In order to only attain Pareto-optimal solutions, we add $\rho\sum_{i=1}^n f_i(T)$ as a tie-break in the cost function, where $\rho> 0$ is a sufficiently small constant:
\be
\ccheb(T) = \max_{i=1,\dots,n} f_i(T) w_i + \rho \sum_{i=1}^n f_i(T).
\label{eq:cheb_cost}
\ee

We refer to this as the weighted maximum (WM), or \textit{augmented Chebyshev problem} \cite{branke2008multiobjective}.
Next, we characterize its expressiveness compared to the WS.
Given a planning problem with the ground set of feasible trajectories $\T$, let $\T'\subseteq \T$ be the set of all Pareto-optimal trajectories. Further, let $\T^{\mathtt{sum}} \subseteq\T$ be the set of trajectories that are optimal for \textit{some} weight in \eqref{eq:lin_cost}
and let $\T^{\mathtt{max}} \subseteq\T$ be the set of trajectories that are optimal for \textit{some} weights in \eqref{eq:cheb_cost}.
{
In detail, we have $\T^{\mathtt{sum}} = \{T' \in \T\ | \ T' = \arg\min_T c^{\mathtt{sum}}(T),\ \w\in[0,1]^n\}$, and $\T^{\mathtt{max}} = \{T' \in \T\ | \ T' = \arg\min_T c^{\mathtt{max}}(T),\ \w\in[0,1]^n\}$.
}
We revisit a known result:

\begin{lemma}[Pareto-optimality of scalarization]
For any planning problem $\T^{\mathtt{sum}} \subseteq\T^{\mathtt{max}} \subseteq\T'$.
\end{lemma}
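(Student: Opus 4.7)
My plan is to prove the two inclusions separately. I would first handle $\T^{\mathtt{max}} \subseteq \T'$ directly by a domination argument, then establish $\T^{\mathtt{sum}} \subseteq \T'$ by an analogous argument, and finally close the chain by proving the Pareto-completeness property $\T' \subseteq \T^{\mathtt{max}}$; together these would actually yield $\T^{\mathtt{sum}} \subseteq \T' = \T^{\mathtt{max}}$, which is stronger than the claimed chain.

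For the right inclusion, suppose $T^* \in \T^{\mathtt{max}}$ minimizes $\ccheb$ under some weight vector $\w^* \in [0,1]^n$, and assume for contradiction that some $T' \in \T$ dominates $T^*$. Since $f_i(T') \leq f_i(T^*)$ for every $i$ and $\w^* \geq \vect{0}$, the first term $\max_i w^*_i f_i(\cdot)$ is non-increasing from $T^*$ to $T'$; moreover, because strict dominance holds in at least one coordinate $j$, the tie-break term $\rho \sum_i f_i(\cdot)$ strictly decreases. Hence $\ccheb(T') < \ccheb(T^*)$, contradicting minimality of $T^*$. This is precisely the purpose of the augmentation $\rho \sum_i f_i$: without it, a minimizer of the pure weighted maximum could be merely weakly Pareto-optimal.

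For the left inclusion I would route through $\T'$. The containment $\T^{\mathtt{sum}} \subseteq \T'$ follows by the same contradiction pattern applied to $\sum_i w^{\mathtt{sum}}_i f_i(T)$, since strict dominance in any coordinate strictly decreases the linear aggregate as soon as $\w^{\mathtt{sum}} \neq \vect{0}$. For $\T' \subseteq \T^{\mathtt{max}}$, given a Pareto-optimal $T^*$ I would explicitly construct weights $w^*_i = 1/f_i(T^*)$ whenever $f_i(T^*) > 0$ (and a large value when $f_i(T^*) = 0$), then rescale into $[0,1]^n$; the $\arg\min$ of $\max_i w^*_i f_i(\cdot)$ is invariant under uniform positive rescaling, and $\rho$ is by assumption small enough to preserve the tie-break role after this normalization. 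Under these weights, $\max_i w^*_i f_i(T^*) = 1$, while any $T'$ with $f_j(T') > f_j(T^*)$ for some $j$ satisfies $\max_i w^*_i f_i(T') > 1$; and any $T'$ that weakly dominates $T^*$ must match $T^*$ on every objective (otherwise $T^*$ would not be Pareto-optimal), hence has the same $\ccheb$ value. Therefore $T^* \in \arg\min_T \ccheb(T)$, so $T^* \in \T^{\mathtt{max}}$.

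The main technical obstacle is this Pareto-completeness step: producing explicit weights that single out a prescribed Pareto-optimal trajectory, handling coordinates where $f_i(T^*) = 0$ via a limiting or large-weight argument, and verifying that the ``sufficiently small'' constant $\rho$ does not interfere with the $\arg\min$ after renormalization into $[0,1]^n$. The two contradiction arguments, by contrast, are routine consequences of the strict monotonicity of both the componentwise maximum and the sum in the objective values.
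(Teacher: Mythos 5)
The paper itself does not prove this lemma---it explicitly omits the proof and cites \cite{branke2008multiobjective}---so your proposal supplies an argument the authors chose not to write out. Your decomposition (soundness of each scalarization via a domination argument, plus Pareto-completeness of the augmented Chebyshev cost to close the chain) is the standard textbook route, and your proof of $\T^{\mathtt{max}} \subseteq \T'$ is correct: under domination the weighted maximum is non-increasing while the augmentation term $\rho\sum_i f_i$ strictly decreases, which is precisely the role of the augmentation.

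There is, however, a genuine flaw in your step for $\T^{\mathtt{sum}} \subseteq \T'$. You claim that strict dominance in some coordinate $j$ strictly decreases $\sum_i w_i f_i$ ``as soon as $\w \neq \vect{0}$.'' This fails when $w_j = 0$ on the strictly improved coordinate: take $\f(T^*) = (1,2)$, $\f(T') = (1,1)$ and $\w = (1,0)$; then $T'$ dominates $T^*$ yet $\clin(T') = \clin(T^*)$, so $T^*$ minimizes \eqref{eq:lin_cost} without being Pareto-optimal. Since the paper allows $\w \in [0,1]^n$, your argument needs either strictly positive weights or uniqueness of the minimizer---the WS objective has no $\rho$-augmentation to break such ties, unlike $\ccheb$. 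Separately, the completeness step $\T' \subseteq \T^{\mathtt{max}}$, which you rightly flag as the hard part and which is the only way your plan yields $\T^{\mathtt{sum}} \subseteq \T^{\mathtt{max}}$, remains a sketch: for a fixed $\rho>0$ the augmented Chebyshev scalarization recovers only properly Pareto-optimal points in general, and your $1/f_i(T^*)$ construction with ``a large value'' for zero-valued objectives only closes when the trajectory set is finite so that $\rho$ and that large weight can be chosen uniformly. That suffices for the paper's graph-based setting, but as written the completeness step is not yet a proof.
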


A proof is omitted since this is a well established result in multi-objective optimization \cite{branke2008multiobjective}. The lemma ensures that any solution to the two scalarized optimization methods is always a Pareto-optimal solution. 
However, a lesser known result is while all solutions to \eqref{eq:lin_cost} are Pareto-optimal, there can exist Pareto-optimal solutions that are not a solution to \eqref{eq:lin_cost} for any $\w$.
Thus, the WS is less expressive than the WM.

\begin{proposition}[Expressiveness]
\label{prop:expressive}
Given a planning problem where trajectories $T$ are parametrized in $\mathbb{R}^m$, and auxiliary cost functions $\f(T)=[f_1(T)\, \dots\, f_n(T)]$.
If at least one of the cost functions $f_i(T)$ is not a proper convex\footnote{{A convex function $f(x)$ is proper convex over its domain $\mathcal{X}$ if it never attains $-\infty$, and if there exists at least some $x_0$ where $f(x_0)<\infty$.}} and continuous function over $\mathbb{R}^m$ then $\T^{\mathtt{sum}}\subset \T^{\mathtt{max}}$, \textit{i.e.,} optimizing \eqref{eq:lin_cost} is strictly less expressive than optimizing \eqref{eq:cheb_cost}.
\end{proposition}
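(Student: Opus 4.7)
The plan is to establish the strict inclusion $\T^{\mathtt{sum}} \subset \T^{\mathtt{max}}$ by (i) proving that the augmented Chebyshev formulation in \eqref{eq:cheb_cost} is Pareto-complete, i.e.\ $\T^{\mathtt{max}} = \T'$, and then (ii) exhibiting a Pareto-optimal trajectory that no choice of weights $\w$ can render optimal for the weighted sum.

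\textbf{Step 1 (Pareto-completeness of WM).} Given any $T^\ast \in \T'$, the plan is to set $w_i = 1/\bigl(f_i(T^\ast)+\epsilon\bigr)$ for a small $\epsilon>0$ chosen so that denominators are strictly positive, so that every weighted component $f_i(T^\ast)w_i$ is bounded by the same value. Suppose some $T\neq T^\ast$ attains $\ccheb(T) \leq \ccheb(T^\ast)$. If the maximum term strictly decreases, every weighted component must drop below that common value, which forces $f_i(T)<f_i(T^\ast)$ for all $i$ and contradicts Pareto-optimality of $T^\ast$. If the maximum term is equal, the tie-breaking sum $\rho\sum_i f_i$ forces $T$ itself to dominate $T^\ast$, again contradicting Pareto-optimality. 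Together with the preceding lemma, this gives $\T^{\mathtt{max}} = \T'$.

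\textbf{Step 2 (WS misses some Pareto-optima).} I would invoke the standard supporting-hyperplane characterization: $T\in\T^{\mathtt{sum}}$ if and only if $\f(T)$ lies on the boundary of the convex hull of the attainable image set $\F := \{\f(T) : T\in\T\}$, with the supporting hyperplane having non-negative normal $\w$. Under the hypothesis that at least one $f_i$ fails to be proper convex and continuous on $\R^m$, the image $\F$ need not be convex, so there can exist Pareto-optimal points lying in the ``indentations'' of $\F$ that admit no supporting hyperplane with non-negative normal. The cleanest way to conclude is to construct a minimal witness instance that produces exactly such an unsupported Pareto-point---the visibility-graph scenario shown in Figure~\ref{fig:intro} (with the non-convex clearance objective) already realizes this pattern, and one can formalize it as a two-dimensional example where the Pareto-front has a concave arc whose interior is unreachable by any non-negative linear functional.

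\textbf{Step 3.} Combining Step 1 and Step 2 yields a trajectory in $\T'\setminus \T^{\mathtt{sum}} = \T^{\mathtt{max}}\setminus \T^{\mathtt{sum}}$, which together with $\T^{\mathtt{sum}}\subseteq\T^{\mathtt{max}}$ gives strict inclusion.

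\textbf{Main obstacle.} The subtle part is Step~2: the hypothesis ``some $f_i$ is not proper convex and continuous'' is a purely analytic condition on a single objective, whereas the conclusion is a geometric statement about the image set $\F$ in $\R^n$ generated by all of $\f$ over $\T$. One must argue that non-convexity of an individual objective is enough to force a non-convex indentation in $\F$ that contains a Pareto-point, rather than being smoothed out by $\T$ or by the other $f_j$. I would address this by reducing to a witness construction: pick a domain on which $f_i$ witnesses the failure of proper convexity (a secant lying strictly above the graph, or a discontinuity producing a gap in $\F$), use a second objective that is monotone in the same parameter, and verify that the resulting two-dimensional Pareto-front has a concave piece whose interior is unreachable by any non-negative weighting. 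Everything else (closure under tie-breaking, handling $\rho$, boundary cases of zero objective values) is routine.
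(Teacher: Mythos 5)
Your overall decomposition is the same as the paper's: establish that the weighted maximum is Pareto-complete ($\T^{\mathtt{max}}=\T'$) and that the weighted sum fails to be when convexity is violated ($\T^{\mathtt{sum}}\subset\T'$). The difference is that the paper discharges both halves by citation --- Theorem 6.3 and Remark 6.4 of \cite{censor1977pareto} for the claim that proper convexity and continuity of the costs is exactly what linear scalarization needs to be Pareto-complete, and \cite{branke2008multiobjective} for Chebyshev completeness --- whereas you attempt to prove them directly. Under that choice, your Step 2 is not ``routine'': it is precisely the content of the cited theorem, and you leave it as a plan. You correctly identify the obstacle (an analytic failure of convexity of one $f_i$ must be converted into a concave, unsupported arc of the image set $\F$ that actually contains a Pareto point), but you never carry out the witness construction, and without some richness assumption on $\T$ the bare implication can fail (take $\T$ a singleton, or a feasible set on which the non-convex $f_i$ happens to restrict to a convex function: then $\T^{\mathtt{sum}}=\T^{\mathtt{max}}$). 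This is why the paper leans on the precise hypotheses of Censor's result rather than a hand-built example; as written, Step 2 is a genuine gap.

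Step 1 also has a concrete flaw. With $w_i = 1/\bigl(f_i(T^*)+\epsilon\bigr)$ the weighted components $f_i(T^*)w_i = f_i(T^*)/\bigl(f_i(T^*)+\epsilon\bigr)$ are \emph{not} equal across $i$; from $\max_i w_i f_i(T) < \max_i w_i f_i(T^*)$ you may only conclude that each $w_i f_i(T)$ lies below the overall maximum, not below its own $w_i f_i(T^*)$, so $f_i(T)<f_i(T^*)$ for all $i$ does not follow. The standard completeness argument requires the components $w_i f_i(T^*)$ to be equalized (e.g., $w_i = 1/f_i(T^*)$ measured from a utopia point), with zero-valued objectives handled separately rather than by an $\epsilon$-shift that destroys the equalization. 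The tie-break clause is likewise incorrect as stated: equal max terms together with $\sum_i f_i(T)\leq\sum_i f_i(T^*)$ does not force $T$ to dominate $T^*$ --- $T$ may simply be a different Pareto point with a smaller sum, in which case your weights fail to recover $T^*$ at all. This is the known caveat of the \emph{augmented} Chebyshev scalarization with fixed $\rho>0$ (it misses Pareto points whose trade-offs are too extreme relative to $1/\rho$), which is why the paper stipulates $\rho$ ``sufficiently small'' and again defers to \cite{branke2008multiobjective} rather than proving completeness from scratch.
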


The proof follows directly from Theorem 6.3 and Remark 6.4 in \cite{censor1977pareto}: Linear scalarization is only Pareto-complete when the costs are proper convex and continuous. Thus, when that condition is violated we have $\T^{\mathtt{sum}}\subset \T'$. In contrast, the WM is Pareto-complete \cite{branke2008multiobjective}, \textit{i.e.,} $\T^{\mathtt{max}}=  \T'$ always holds.

The effect of Proposition \ref{prop:expressive} becomes apparent in Figure \ref{fig:intro}. 
When there is more than one homotopy class for navigating around obstacles, the objective for minimizing closeness to obstacles becomes non-convex. As a consequence, the Pareto-front is non-convex. The WS method is then only able to find solutions lying on the convex hull of the Pareto-front, while the WM method finds solutions in all parts of the Pareto-front.

\section{Motion planning with weighted maximum cost}

We now consider the problem of finding an optimal trajectory for the proposed WM cost for given weights $\w$.
Thus, we study how the WM cost can be used in continuous space motion planners such as Model-Predictive Control (MPC), and in discrete, graph-based planners such as state-lattices.

\subsection{Continuous space planning}
{We consider a discrete time, continuous space planning problem to find an optimal trajectory $T$, subject to kinodynamical constraints $g(T) \leq 0$. When minimizing the proposed WM cost, the problem may be written as}
\be
\begin{aligned}
\label{eq:continuous}
\min_{T} &\,\max_i f_i(T) w_i + \rho\sum_{i=1}^n f_i(T)\\
s.t.&\, g(T)\leq 0.
\end{aligned}
\ee
Following \cite{boyd2004convex}, this can be reformulated {as follows}
\be
\label{eq:continuous_reformulated}
\begin{aligned}
    \min_T &\;t + \rho\sum_{i=1}^n f_i(T)\\
    s.t.&\,\max_i\ w_i f_i(T)  \leq t\; \text{for } i=1,\dots,n,\\
    &\,g(T)\leq0.
\end{aligned}
\ee
The newly introduced max constraint can be written out as $n$ constraints of the form $w_i f_i(T)  \leq t$ for $i=1,\dots, n$.
{We observe that this removes the maximization from the problem, such that the objective and constraints are linear compositions of the individual objective $f_i$.}
In case constraints $g(T)$ are non-convex, solving for the weighted maximum does not make the problem fundamentally harder than optimizing for the weighted sum.
However, the same does not hold for graph-based planners as we will show next.

\subsection{Graph based planning}
We now consider the WM cost for discrete space motion planners such as graph or lattice based methods and characterize the hardness of the problem. 
Let $G=(V,E)$ be a graph where we associate each edge $e\in E$ with non-negative and bounded trajectory costs $f'_1(e), \dots, f'_n(e)$.\\

\paragraph{LP formulation}
We briefly consider the simple case where the costs of a path are the sum of the edge costs  $f_i(P)=\sum_{e\in E(P)} f_i(e)$. We recall the linear program (LP) formulation of a shortest path problem, \textit{i.e.,} a path minimizing \eqref{eq:lin_cost}. Thus, the cost of an edge $e$ is given by $\f(e) \cdot \w$, and the network flow constraints are summarized as $F(\vect{x})\leq\vect{b}$. The well-known LP-formulation \cite{bertsimas1997introduction} then is
 \be
\begin{aligned}
    \min_{\vect{x}} &\; \sum_{(v,u)} x_{vu}\cdot\f(e_{vu}) \cdot \w\\
    s.t.&\, F(\vect{x})\leq \vect{b}.
\end{aligned}
\label{eq:shortest_path_LP}
\ee
Here $\vect{x}$ is a binary vector, where $x_{vu}=1$ indicates that edge $(v,u)$ is contained in the path. 
When now considering the WM cost, the objective is $\min_{\vect{x}} \;\max_i \  w_i \sum_{(v,u)} x_{vu}\cdot f'_i(e_{vu})+ \rho\sum_{i=1}^n f_i(e_{vu})$. In principle, we can apply the same re-formulation technique as in equation \eqref{eq:continuous_reformulated} and still obtain an LP. However, the solution will not have integer values since the constraints are no longer totally-unimodular. Hence, the LP solution will not solve the shortest path problem \cite{bertsimas1997introduction}.

\paragraph{Formal problem analysis}
Given that the LP-formulation for shortest paths does not work for the WM cost, we study the problem of finding a path that minimizes \eqref{eq:cheb_cost} in more detail.
We consider the general case where the costs of a path $P$ are not necessarily the sum of the edge costs in the path. Instead, the cost for a path $P$ with edges $E(P)$ is
\be
\label{eq:path_costs}
f_i(P)=\beta(f'_i(e_1), f'_i(e_2), \dots), \text{ where } e_1, e_2, \dots \in E(P).
\ee
We refer to $\beta$ as the composition function and assume that $\beta$ is monotonically increasing. This captures two widely used concepts of defining costs over a robot's trajectory: i) summation or integration over the trajectory to compute its length, time, integral square jerk, accumulated risk or similar costs, and ii) taking the maximum value over a trajectory such as the maximum jerk or maximum risk.
Thus, we can state the problem of finding a path of minimal maximum weighted cost.
\begin{problem}[Min-max cost path (MMCP)]
\label{prob:mincostPath}
    Given a strongly connected graph $G=(V,E)$ with start and goal vertices $s,g$ in $V$, edge cost functions $f'_1(e), \dots, f'_n(e)$, a composition function $\beta$, and weights $w_1, \dots, w_n$, find a path that solves 
    \be
    \min_P \max_i w_i f_i(P)+ \rho\sum_{i=1}^n f_i(P).
    \ee
\end{problem}

The problem is closely related to the multi-objective shortest path (MOSP) problem{, which is NP-hard for two or more objectives.} \cite{ehrgott2005multicriteria}. The main difference is that MOSP considers that $\beta$ is taking the sum over different edge cost, which makes it a special case of Problem \ref{prob:mincostPath}. 
We formally establish hardness of our problem.
\begin{proposition}[Hardness of MMCP]
    The MMCP is NP-hard.
\end{proposition}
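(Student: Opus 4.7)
The plan is to prove NP-hardness by polynomial reduction from the classical \textsc{Partition} problem. Given a \textsc{Partition} instance with positive integers $a_1, \dots, a_k$ summing to $2S$, I will construct an MMCP instance whose optimum value encodes whether the \textsc{Partition} instance admits a balanced split.

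The reduction builds a graph consisting of $k$ chained choice gadgets. Let $s = v_0$ and $g = v_k$. For each $i \in \{1,\dots,k\}$, introduce auxiliary vertices $u_i^+$ and $u_i^-$, together with two disjoint two-edge sub-paths $v_{i-1} \to u_i^+ \to v_i$ and $v_{i-1} \to u_i^- \to v_i$; these auxiliary vertices are needed because the paper's definition of a path forbids repeated vertices, so parallel edges cannot be used directly. The edge costs $(f'_1, f'_2)$ are set so that the first edge of the upper sub-path has cost $(a_i, 0)$, the first edge of the lower sub-path has cost $(0, a_i)$, and the remaining two edges have cost $(0,0)$. I take $\beta$ to be summation and set weights $w_1 = w_2 = 1$; since MMCP is parameterized over monotone $\beta$, showing hardness for this particular choice suffices.

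Every $s$-$g$ path selects exactly one branch per gadget, which corresponds to a subset $I \subseteq \{1,\dots,k\}$ of indices routed through the upper branch. This yields $f_1(P) = \sum_{i \in I} a_i$ and $f_2(P) = 2S - f_1(P)$, so $f_1(P) + f_2(P) = 2S$ holds for every feasible $s$-$g$ path. Consequently the tie-break term $\rho \sum_i f_i(P) = 2S\rho$ is a constant across all candidate paths, and minimizing the WM cost reduces to minimizing $\max(f_1(P), f_2(P))$. That quantity is at least $S$, with equality if and only if $I$ balances the partition, i.e., the \textsc{Partition} answer is yes. Since the construction has $O(k)$ vertices and edges with weights bounded by $\max_i a_i$, the reduction is polynomial in the input size, and an efficient MMCP solver would decide \textsc{Partition}.

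The main obstacle is mostly bookkeeping rather than a deep insight: some care is needed to verify that the gadget encodes an independent binary choice per element while keeping paths simple (no repeated vertices), and that the tie-break term is indeed constant on the constructed graph so it cannot distort the equivalence between the MMCP optimum and the \textsc{Partition} decision. Picking $\beta$ to be summation aligns the construction naturally with \textsc{Partition}; corresponding hardness arguments for other monotone choices of $\beta$ (e.g., maximum) are of independent interest but not required here.
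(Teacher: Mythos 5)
Your proof is correct, but it takes a genuinely different route from the paper. The paper proves hardness by reducing from the multi-objective shortest path problem (MOSP): it restricts to $\beta$ being summation, sets $f_i(e)=\gamma_i(e)$ and $w_i=1$, and observes that $\max_i \sum_{e\in E(P)}\gamma_i(e)\leq\alpha$ holds iff $\sum_{e\in E(P)}\gamma_i(e)\leq\alpha$ for all $i$, so a MMCP oracle decides MOSP; the argument is two lines but leans entirely on the cited NP-hardness of MOSP. You instead reduce directly from \textsc{Partition} via an explicit chain of diamond gadgets, which makes the proof self-contained and, as a bonus, handles the tie-break term $\rho\sum_i f_i(P)$ explicitly by showing it is constant ($2S\rho$) over all $s$--$g$ paths --- a detail the paper's decision-version formulation quietly drops. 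Your auxiliary vertices $u_i^{\pm}$ correctly avoid parallel edges and repeated vertices, and since each $v_i$ is a cut vertex, every simple $s$--$g$ path selects exactly one branch per gadget, so the equivalence $\min_P\max(f_1,f_2)=S \Leftrightarrow$ \textsc{Partition} is solvable goes through. The only cosmetic mismatch is that Problem~2 asks for a strongly connected graph while your gadget chain is a DAG; this is harmless (interpret the graph as undirected, or add reverse edges, which cannot appear in any simple $s$--$g$ path), but worth a sentence. Note both proofs establish only weak NP-hardness, since \textsc{Partition} (and the standard hardness of MOSP) is weakly NP-hard; that matches the paper's claim.
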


\begin{proof}
    We consider the special case that $f_i(P) = \sum_{e\in E(P)} f_i(e)$. 
    The decision version of MMCP decides if there exists a path such that the $\max_i w_i f_i(P) \leq \alpha$ for some constant $\alpha$.
    We can reduce MOSP to MMCP. MOSP takes as an input a graph $G=(V,E)$ and some cost functions $\gamma_1,\dots, \gamma_n$ assigning costs to edges. The decision version answers if there exists a path $P$ such that $\sum_{e\in E(P)}\gamma_i(e)\leq \alpha$ for all $i$ and some constant $\alpha$.
    Given an instance of MOSP, we use the same graph as an input for MMCP, choose costs $f_i(e)=\gamma_i(e)$, and set $w_i=1$ for $i=1,\dots,n$.
    The solution to MMCP then indicates if $\max_i\sum_{e\in E(P)}\gamma_i(e)\leq \alpha$, which trivially also decides the MOSP instance.
\end{proof}
\bigskip

\paragraph{Algorithm description}
We now present a complete algorithm for MMCP, detailed in Algorithm \ref{alg:graph search}.
Our approach is a modification to Dijkstra's algorithm where we record all paths to a vertex, similar to the \textit{Martin algorithm} for MOSP \cite{gandibleux2006martins}.
{To that end, the elements in our $\mathtt{open\_list}$ are tuples consisting of a cost, a vertex, and a path (\textit{i.e.,} a sequence of preceding vertices) from the start to this vertex.
Similar to Dijkstra's our algorithm retrieves the lowest cost element from the $\mathtt{open\_list}$ (line 3). We then expand the neighbouring vertices $u$ (line 7) and ensure that the path to the neighbour is not in the $\mathtt{open\_list}$, is not dominated by another path to $u$, and does not contain cycles (lines 8-10). We then add the path to $u$ with its WM cost to the $\mathtt{open\_list}$ (line 12-13).
}
Algorithm \ref{alg:graph search} is able to handle any monotonically increasing composition function $\beta$ (see equation \eqref{eq:path_costs}) as opposed to the sum of edge costs considered in MOSP.
Opposed to MOSP, we are only interested in finding one solution for a given weight instead of the set of all Pareto-optimal paths. Thus, Algorithm \ref{alg:graph search} terminates once the goal is reached {(lines 4-5)}.\\

\begin{algorithm}[!t]
\DontPrintSemicolon
  
  \KwInput{Graph $G=(V,E)$, start and goal $s,g$ in $V$, cost functions on edges $\f=[f_1, \dots, f_n]$, composition function $\beta$, weight $\w$}
  \KwOutput{Path from $s$ to $g$ with minimal WM cost}

  $\mathtt{open\_list} = \{(0, s, (s)) \}$  \\
  
  \While{$|\mathtt{open\_list}|>0$}
  {
  $(cost, v, P)\leftarrow \mathtt{open\_list}.\mathtt{pop()}$ \grey{// get by min cost}\\
  \If{$v = g$}{
   \Return{$P$}
  }
  \For{$u$ in $\mathtt{Neighbours}(v)$}{
      Create tentative path $P^u = P\cup \{u\}$\\
      $\mathcal{P}\leftarrow$ all paths in $\mathtt{open\_list}$ that end at $u$ \\
      \If{Any $P'\in \mathcal{P}$ \textit{dominates } $P^u$ \textbf{or} $P^u$ contains cycles}{\Continue}
      Delete tuples from  $\mathtt{open\_list}$ where $P'$ is dominated by $P^u$\\
      Comp.~WM cost $\ccheb(P^u, \w)$ \grey{// Using  \eqref{eq:cheb_cost}, \eqref{eq:path_costs}}\\
        
      Add $(\ccheb(P^u, \w), u, P^u )$ to $\mathtt{open\_list}$
  
      }
    }
\Return{Failure}
\caption{Min-Max Cost Path}
\label{alg:graph search}
\end{algorithm}

\paragraph{Theoretical properties}

{First, we characterize the runtime.
In the worst case, Algorithm \ref{alg:graph search} explores all paths from $s$ to any vertex $u$, leading to $2^{|V|}$ (the size of the power set for all sequences of vertices) executions of the while loop. For each subpath, we compute its cost only once in line 12, which requires evaluating $f_i(e)$ for all its edges and all $n$ objective functions. The number of edges is upper bound by $|V|^2$. Assuming that the evaluation of the costs $f_i(e)$ takes constant time, the total runtime is $O(2^{|V|}\cdot |V|^2 \cdot n$). While the runtime only grows linearly with the number of objective functions, it can scale exponentially with the number of vertices. 
However, due to the stopping criteria in line 4, the algorithm does not enumerate all $2^{|V|}$ solutions in practice. In our simulations we show that it is able to solve instances with $|V|=2000$.
}

{Next we will establish correctness of the algorithm. We begin by considering the subpath elimination in line 9.}
\begin{lemma}[Subpath elimination]
\label{lem:subpath}
Let $P'$ and $P^u$ be two subpaths from $s$ to $u$ such that $P'$ dominates $P^u$. 
{If the composition function $\beta$ is monotone,} then $P^u$ cannot be part of an optimal path from $s$ to the goal $g$.
\end{lemma}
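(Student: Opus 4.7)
The plan is to proceed by contradiction: assume that there exists an optimal path $P^* = P^u \circ Q$ from $s$ to $g$ whose prefix up to $u$ is exactly $P^u$, where $Q$ is the suffix from $u$ to $g$. I will then construct a competitor $\tilde P = P' \circ Q$ by swapping the prefix $P'$ for $P^u$ and derive that $\ccheb(\tilde P) \leq \ccheb(P^*)$, so that the replacement path is at least as good, meaning $P^u$ can be discarded without sacrificing optimality.

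The first step is to bound the objective values of $\tilde P$ against those of $P^*$ one objective at a time. Since $P'$ dominates $P^u$ we have $f_i(P') \leq f_i(P^u)$ for every $i$, and the monotonicity of the composition function $\beta$ in \eqref{eq:path_costs} lets this inequality propagate when we append the edges of $Q$ on both sides: informally, $\beta$ applied to the concatenated edge-cost list for $P'$ and $Q$ cannot exceed $\beta$ applied to the concatenated list for $P^u$ and $Q$. This gives $f_i(\tilde P) \leq f_i(P^*)$ for all $i = 1,\dots,n$, and strict inequality on the index $j$ where $P'$ strictly beats $P^u$ (at least for the summation case; for the max case one obtains $\leq$, which is still enough to conclude non-strict optimality).

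The second step is to translate these per-objective inequalities into an inequality on $\ccheb$. Because both $\max_i w_i(\cdot)$ and $\rho \sum_i(\cdot)$ are monotone in each $f_i$, the definition \eqref{eq:cheb_cost} yields $\ccheb(\tilde P) \leq \ccheb(P^*)$. Since $P^*$ was assumed optimal, $\tilde P$ is optimal as well; in particular, there is always an optimal path that does not use $P^u$ as a prefix, which is precisely the statement that $P^u$ need not be retained by Algorithm~\ref{alg:graph search}.

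The main obstacle I anticipate is the possibility that $\tilde P = P' \circ Q$ is not a simple path: vertices of $Q$ may already appear in $P'$ even though they do not appear in $P^u$, introducing cycles that violate the paper's definition of a path. The fix is to shortcut any such cycle, \emph{i.e.}, to delete the edges between the first and last occurrence of every repeated vertex. Monotonicity of $\beta$ guarantees that removing edges can only make each $f_i$ non-increasing, so the shortcut path $\tilde P_{\mathrm{simp}}$ still satisfies $f_i(\tilde P_{\mathrm{simp}}) \leq f_i(P^*)$ for all $i$, and the same chain of inequalities for $\ccheb$ carries through. With this detail handled, the contradiction (or at least the non-strict replacement argument that justifies the pruning rule in line~9 of Algorithm~\ref{alg:graph search}) is complete.
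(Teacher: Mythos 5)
Your argument is essentially the paper's own proof: swap the dominated prefix $P^u$ for $P'$, propagate the per-objective inequalities through the monotone composition $\beta$ when appending the suffix $Q$, and conclude $\ccheb(P'\cup Q)\leq\ccheb(P^u\cup Q)$, so $P^u$ can be safely pruned (both you and the paper correctly settle for the non-strict version, which is what the algorithm needs). Your additional handling of the case where $P'\cup Q$ revisits a vertex of $P'$ is a legitimate extra care point that the paper's proof silently skips, and your shortcutting fix is sound given non-negative edge costs and monotone $\beta$.
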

\begin{proof}
    If $P'$ dominates $P^u$ then $f_i(P')\leq f_i(P^u)$ for all $i$. Now consider any possible path $Q$ from $u$ to the goal $g$. Since $f_i$ is \textit{monotone} we then have $f_i(P'\cup Q)\leq f_i(P^u\cup Q)$. Finally, $c(P'\cup Q)\leq c(P^u\cup Q)$ follows directly from \eqref{eq:cheb_cost}.
    Hence, $P^u$ cannot lead to a path to the goal of lower cost than $P'$ and thus may be disregarded from the search.
\end{proof}
The result is similar to the analysis in \cite{ehrgott2005multicriteria}, yet extends to the case of any monotone composition function $\beta$ instead of only sums.
Based on Lemma \ref{lem:subpath}, we can ensure that our algorithm finds the optimal solution.
\begin{proposition}[Correctness]
\label{prop:correct}
For any given weight $\w$, Algorithm \ref{alg:graph search} returns the optimal solution $P^* = \arg\min_P\max_i f_i(P) \cdot w_i$.
\end{proposition}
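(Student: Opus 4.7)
The plan is to prove correctness by combining Lemma~\ref{lem:subpath} with a monotonicity property of the WM cost along path prefixes, and to argue via an invariant that the optimal goal-reaching path (or an equivalent optimal alternative) is always represented in the $\mathtt{open\_list}$ at every iteration. First I would fix any optimal path $P^* = (v_0, v_1, \ldots, v_k)$ with $v_0 = s$, $v_k = g$, and optimal cost $c^* = \ccheb(P^*)$, and denote its prefixes $P^*_j = (v_0, \ldots, v_j)$. A key preliminary observation is that by monotonicity of $\beta$, $f_i(P^*_j) \leq f_i(P^*_{j+1})$ for every $i$, which implies $\ccheb(P^*_0) \leq \ccheb(P^*_1) \leq \cdots \leq \ccheb(P^*_k) = c^*$, so every prefix of $P^*$ has WM cost at most $c^*$.

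Next I would establish the following invariant maintained by the main loop: after any iteration, for each $j \in \{0, \ldots, k\}$, either a tuple holding $P^*_j$ (or some subpath that dominates $P^*_j$ and extends to an equally optimal goal-reaching path) has already been popped, or such a tuple is currently in $\mathtt{open\_list}$. The base case holds since $(0, s, (s))$ is inserted at initialization. For the inductive step, when the tuple corresponding to a valid replacement of $P^*_{j-1}$ is popped, its neighbour expansion considers $v_j$; the tentative path $P \cup \{v_j\}$ is added unless dominated by some $P' \in \mathcal{P}$. In the latter case, Lemma~\ref{lem:subpath} together with optimality of $P^*$ forces $P'$ extended by $(v_{j+1}, \ldots, v_k)$ to also have WM cost $c^*$, so the dominator supplies an equally optimal alternative whose prefix is tracked by the open list. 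The cycle check is harmless because $P^*$, being a path, has no repeated vertices.

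Finally, I would conclude by appealing to the priority-queue discipline: because $\ccheb$ is monotone along prefixes, every open-list entry whose path extends toward $g$ is extracted no earlier than an entry of smaller cost. Combined with the invariant (which guarantees that some tuple ending at $g$ with cost $c^*$ eventually enters the open list), the first popped goal tuple must have cost exactly $c^*$, so the returned path is optimal. The main obstacle will be formalizing the invariant under domination: when $P^*_j$ is eliminated by a dominator $P'$, the argument must simultaneously update the \emph{tracked} optimal path so that the optimality-preserving extension still exists, ideally through a swap lemma that applies Lemma~\ref{lem:subpath} in both directions. Handling this cleanly, and covering the edge case in which the popped tuple already arrives at $g$ so neighbour expansion is skipped, is the delicate part; the remainder is standard Dijkstra-style termination reasoning.
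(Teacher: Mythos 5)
Your proposal follows essentially the same route as the paper's proof: it uses Lemma~\ref{lem:subpath} to show the optimal path (or an equally optimal dominator) is never eliminated, argues that a goal tuple is eventually popped, and invokes monotonicity of the WM cost along prefixes together with the min-cost extraction order to conclude the first goal tuple popped is optimal. Your version is simply a more formal elaboration (via the explicit open-list invariant) of the paper's three-step sketch, and correctly flags the delicate point about re-tracking the optimal path after a domination swap, which the paper glosses over.
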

\begin{proof}
    The proof follows three steps:
    (i) Lemma \ref{lem:subpath} ensures that we never eliminate the optimal path during the search.
    (ii) Eventually, a tuple $(cost, v, P)$ where $v=g$ will be pulled from the open list and we thus find a path to the goal. (iii) the first time such a tuple is retrieved from the open list, it must have the minimal cost in the open list, and since it is the element of minimal cost in the open list and the cost is monotone, there cannot be another subpath in the open list that, when extended until $g$, achieves a smaller cost.
\end{proof}

\bigskip
\paragraph{Cost-to-go heuristic}
{While Algorithm \ref{alg:graph search} is optimal, its runtime scales exponentially with the size of the graph.}
The runtime can be improved with a cost-to-go heuristic as in an \textsc{A$^*$} or \textsc{D$^*$} algorithms {\cite{koenig2004lifelong}}. To use a heuristic, we augment the path $P^u$ with a virtual edge to the goal. 
{This virtual edge allows for including an estimate for the cost-to-go. An \textsc{A$^*$} algorithm simply adds a heuristic value for the cost-to-go to the current cost. In contrast, our problem considers a maximization in the cost as well as a potentially non-linear composition function. Thus, we explicitly add an edge and then calculate the WM cost in line 12 for the augmented path.}
The objective values of the virtual edge must be chosen such that the WM cost of the augmented path is an underestimate of the optimal path to the goal. 
For instance, if one objective is length, we can set the length of virtual edge to the Euclidean distance while other objective values are zero.\\

\paragraph{Runtime Budgeting}
\label{sec:runtime_budget}
Finally, we can modify Algorithm \ref{alg:graph search} to find potentially suboptimal solutions in polynomial runtime{, similar to anytime algorothms such as ARA* \cite{likhachev2003ara}.} To that end, we introduce a budget $b$ for the number of predecessor paths leading to every vertex that we can store. We then only add a new tuple in line 13 when the number of tuples {with a path ending at} $u$ in the open list is below $b$.
{This prevents the $\mathtt{open\_list}$ to grow exponentially, yet might prevent the algorithm from finding an optimal solution.}

In summary, we have shown how the WM cost can be incorporated in continuous- and discrete-space planning problems. For graph based planning we provided hardness results together with a complete algorithm.

\section{Numerical Results}
\label{sec:results}
To illustrate the advantages of the WM method, we consider several motion planning problems with multiple objectives and compare the attainable solutions when using either WS and WM. Further, we investigate the runtime of Algorithm \ref{alg:graph search}.

\subsection{Comparison of WS and WM cost functions}

For given objective functions, we compare how expressive WM ans WS approaches are.
We approximate the sets of attainable solutions $\T^{\mathtt{sum}}$ and $\T^{\mathtt{max}}$ for the WS and WM cost functions as follows: 
We randomly sample a large set of weights $\w^1, \dots, \w^k$ and then compute the respective sets of optimal solutions $\Scal^{\mathtt{sum}}=\{T^{\mathtt{sum}}(\w^1), \dots, T^{\mathtt{sum}}(\w^k)\}$ solving \eqref{eq:lin_cost} and $\Scal^{\mathtt{max}}=\{T^{\mathtt{max}}(\w^1), \dots, T^{\mathtt{max}}(\w^k)\}$ solving \eqref{eq:cheb_cost}. 
Thus, $\Scal^{\mathtt{sum}}$ and $\Scal^{\mathtt{max}}$ are both subsets of the Pareto-front of the MOO described by the given objective functions, \textit{i.e.,} consist of Pareto-optimal solutions.

We use three quantitative measures to compare $\Scal^{\mathtt{sum}}$ and $\Scal^{\mathtt{max}}$: {dispersion}, {coverage} and {number of unique solutions} on the Pareto-fronts. 
The dispersion captures \textit{gaps} in the approximation of the Pareto-front, and is defined as follows:
\begin{definition}[Dispersion]
    Given solutions $\Scal=\{T^1, \dots, T^k\}$, the dispersion of $\Scal$ is the maximum distance between a point $\vect{p}$ on the Pareto-front and the closest $\f(T^i)$ for $i=1,\dots, k$.
\end{definition}
In principle, this distance should be defined as a measure along the Pareto-front, yet for practical purposes we use the Euclidean distance.
Note that for a useful interpretation of  dispersion measure, we require objectives to be normalized.
Coverage captures the volume of the set of points that is dominated by the solutions $\Scal$
\cite{zitzler1999multiobjective}. In a minimization problem with normalized features we sample over the set $[0,1]^n$ to estimate coverage.
Lastly, sampling $k$ different weights does often not lead to $k$ different solutions. Thus, given a set $\Scal=\{T^1, \dots, T^k\}$, we compute the number of trajectories $T^i$ where the Euclidean distance between $\f(T^i)$ and $\f(T^j)$ is above some threshold $\delta$ for all $i, j=1,\dots,k$. 
We refer to this measure as the number of unique solutions.
All experiments run with $k=200$ samples and distance threshold $\delta=0.01$.\\

\paragraph{Simple Obstacles}

First, we revisit the example from Figure \ref{fig:intro} and provide numerical results in Table \ref{tab:exp1} (labelled \textit{Obstacles}).
We observe that WM outperforms WS on all three metrics, and with a large margin  on dispersion on and number of solutions. This highlights the significant shortcomings of the WS even in very simple planning problems.\\

\paragraph{Continuous space motion planning}

The second experiments considers a continuous space motion planner. We use the driver experiment that is popular in numerous studies on reward learning in HRI, for instance \cite{sadigh2017active, biyik2019asking, wilde2020active}.
An autonomous car navigates on a three lane road in the presence of a human-driven vehicle. The problem considers four objectives: heading, position in the lane, speed and distance to the other car. We solve the problem numerically using a numerical solver for constrained non-convex optimization. The min-max objective is implemented as in equation \eqref{eq:continuous_reformulated}.

Qualitative results are shown in Figure \ref{fig:eval_driver}. Since the numerical solver may return suboptimal solutions, we filtered all trajectories that were dominated by another trajectory. Overall, the WM yields a larger variety of solutions. In particular, the solutions for the WS method are only variations of few types of trajectories, while WM offers more nuanced solutions.
On the evaluation measures, the WM clearly outperforms the WS with respect to dispersion and the number of unique solutions, yet by a smaller margin than in the \textit{Obstacles} experiment. For coverage WM has only a small benefit.\\
\begin{figure}[t]
    \centering
    \begin{subfigure}[t]{0.49\textwidth}
        \centering
        \includegraphics[width=1\textwidth]{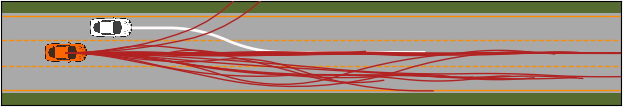}
        \caption{Weighted sum (WS).}
        \label{fig:eval_drver_lin}
    \end{subfigure}
    \\
    \begin{subfigure}[t]{0.49\textwidth}
        \centering
        \includegraphics[width=1\textwidth]{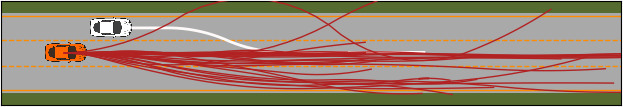}
        \caption{Weighted maximum (WM).}
        \label{fig:eval_driver_cheb}
    \end{subfigure}
    \caption{ 
     Results for the driver experiments. White shows the human driven car, red trajectories show solutions for the autonomous car.}
    \label{fig:eval_driver}
\end{figure}

\paragraph{Graph-based motion planning}

In the third setup we consider a probabilistic roadmap (PRM) with $1000$ vertices, shown in Figure \ref{fig:eval_graph}. Similar to the first experiment, the objectives are path length and closeness to obstacles. We consider two problem variations for closeness: the summed closeness, labelled as Graph-1 in Table \ref{tab:exp1} with an example shown in Figure \ref{fig:eval_graph}, and minimum closeness, labelled as Graph-2.

In Figure \ref{fig:eval_graph} we observe that the WM finds a larger variety of paths, some falling into a homotopy class for which the WS method does not find any path. In the Pareto-fronts WS exhibits several large gaps, while the WM covers the Pareto-front more densely. The gaps of the WS correspond to non-convex parts of the Pareto-front, implying that these parts cannot be covered by the WS for any choice of weights.
The measures in Table \ref{tab:exp1} show again a substantially smaller dispersion, slightly higher coverage and higher number of solutions for WM compared to WS in both graph problems.

\begin{figure}[t]
    \centering
    \begin{subfigure}[t]{0.2\textwidth}
        \centering
        \includegraphics[width=.9\textwidth]{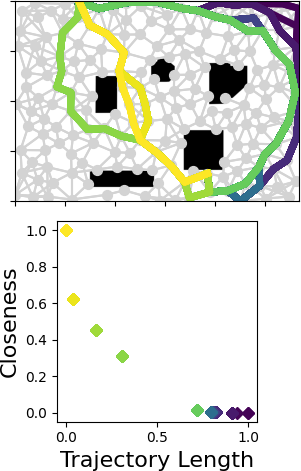}
        \caption{Weighted sum (WS).}
        \label{fig:eval_graph_lin}
    \end{subfigure}
\hfill
    \begin{subfigure}[t]{0.2\textwidth}
        \centering
        \includegraphics[width=.9\textwidth]{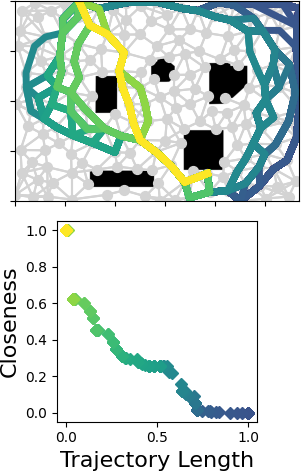}
        \caption{Weighted maximum (WM).}
        \label{fig:eval_graph_cheb}
    \end{subfigure}
    \caption{ 
     Optimal paths for a graph based motion planning problem with two objectives: Path length, and the summed distance to an obstacle. Upper: colored trajectories are solutions found by the weighted sum (a) and weighted maximum (b), respectively. Lower: Objective values of the trajectories of the upper plot.}
    \label{fig:eval_graph}
\end{figure}

\begin{table}[t]
    \centering
    \begin{tabular}{lr ccc}
        \toprule

        Problem  & Cost & $\downarrow$ Dispersion&  $\uparrow$ Coverage & $\uparrow$ $\#$ Solutions
        \\
        \midrule
        Obstacles        
         & $\mathtt{SUM}$  & 0.49 & 0.49 & 9 \\
        & $\mathtt{MAX}$ & 0.15 & 0.65 & 17\\       
         Driver 
         & $\mathtt{SUM}$ & 0.52 & 0.95 & 22 \\
    & $\mathtt{MAX}$ & 0.28 & 0.97 & 29   \\ 
        Graph-1
        & $\mathtt{SUM}$ & 0.26 & 0.71 & 11\\
        & $\mathtt{MAX}$ & 0.19 & 0.75 & 40 \\
        Graph-2
        & $\mathtt{SUM}$ & 0.42 & 0.73 & 6\\
        & $\mathtt{MAX}$ & 0.20 & 0.80 & 10 \\
        \bottomrule
    \end{tabular}
    \caption{Numerical results for different planning problems.}
    \label{tab:exp1}
\end{table}

In summary, in all three planning problems the proposed method is able to find better sets of Pareto-optimal trade-offs compared to the weighted sum method.

\subsection{Performance of WM planning on graphs}
\label{sec:experiment_runtime}

In a second experiment, we investigate the performance of Algorithm \ref{alg:graph search} when using the computation budget and the cost-to-go heuristic. As a heuristic, we add a virtual edge where the length equals the Euclidean distance to the goal and the closeness is zero.
We use a PRM with $2000$ vertices, similar to the one in Figure \ref{fig:eval_graph}. In $1000$ trials, we randomise start and goal locations, as well as the weights in the cost function.
Figure \ref{fig:eval_alg_performance} shows the cost ratio compared to optimal and the computation time for various computation budgets $b$.
For the standard implementation without heuristic, we observe that for $b=50$, all returned solutions are almost optimal (ratio $<1.001$). This comes with an increase in computation time by a factor of $250$ on average, but still remains below $3$ seconds {(Hardware specification: Intel i7-11800H @2.3GHz, 32Gb RAM.)}. Using the cost-to-go heuristic keeps average the runtime increase below a factor of $35$ (or $<.5$ seconds). Moreover, the heuristic also allows for finding close-to-optimal solutions with a very small budget of $b=5$, where the runtime increase is negligible.
Only for $b=1$ the heuristic can misguide the search and yield suboptimal solutions. Yet, this does not invalidate the admissibility of the chosen heuristic: For small $b$ the algorithm has no guarantee for finding an optimal solution, independent of the heuristic.
In conclusion, the cost-to-go heuristic and computation budget allow for finding paths with minimal WM cost within a practical runtime.
\begin{figure}[t]
    \centering

    \includegraphics[width=.45\textwidth]{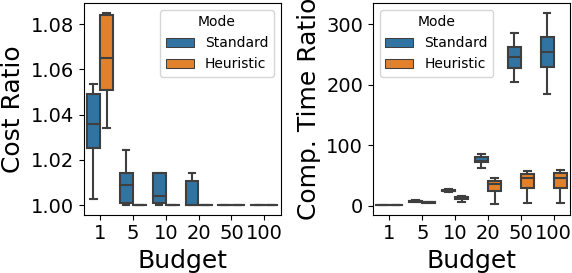}  

    \caption{ 
     Performance of heuristic variants of Algorithm \ref{alg:graph search} with different computation budgets $b$.  Shown are the cost ratio over an optimal solution and the computation time ratio over Dijkstra's algorithm.
     }
    \label{fig:eval_alg_performance}
\end{figure}
%
%
\section{Discussion}
We studied WM cost functions as an alternative to commonly used WS costs in motion planning problems with multiple objectives. We showed that while the WS method is widely used, it might only represent a small subset of all optimal trade-offs when at least one of the objectives is non-convex. We proposed a WM approach as an alternative cost function, which is Pareto-complete. Further, we showed how the WM cost can be used in continuous-space planning, characterized the hardness for graph-based planning and presented a novel path planning algorithm. Our simulations showed that the proposed WM cost is substantially more expressive than the WS across different motion planning problems, and that our proposed path planning algorithm can efficiently find close-to-optimal solutions.
{While the WM formulation makes path planning on graphs NP-hard, our simulation results show that it allows for finding substantially richer sets of solutions, recovering all parts of the Pareto-front. Further, using runtime budgeting and the cost-to-go heuristic allows for computing close-to-optimal solutions within a practical computation time.}

Future work should consider how WM costs can used for learning user preferences in human-in-the-loop learning problems. Given its advantageous expressiveness the WM allows for designing user models that represent a wider variety of user preferences with the same number of parameters.
Another research direction is investigating how robot complex multi-robot routing problems can be solved for a WM cost. 
{Further, for discrete space planning we assumed a monotonic composition function. Future work could include non-monotonic cost functions to broaden the range of applications.}
Lastly, finding suitable parameters for the WM cost remains a challenge. Thus, we plan to adapt our earlier work \cite{botros2022error} to find sets of representative weights for the WM cost.
\bibliographystyle{IEEEtran}

\begin{thebibliography}{10}
\providecommand{\url}[1]{#1}
\csname url@samestyle\endcsname
\providecommand{\newblock}{\relax}
\providecommand{\bibinfo}[2]{#2}
\providecommand{\BIBentrySTDinterwordspacing}{\spaceskip=0pt\relax}
\providecommand{\BIBentryALTinterwordstretchfactor}{4}
\providecommand{\BIBentryALTinterwordspacing}{\spaceskip=\fontdimen2\font plus
\BIBentryALTinterwordstretchfactor\fontdimen3\font minus
  \fontdimen4\font\relax}
\providecommand{\BIBforeignlanguage}[2]{{%
\expandafter\ifx\csname l@#1\endcsname\relax
\typeout{** WARNING: IEEEtran.bst: No hyphenation pattern has been}%
\typeout{** loaded for the language `#1'. Using the pattern for}%
\typeout{** the default language instead.}%
\else
\language=\csname l@#1\endcsname
\fi
#2}}
\providecommand{\BIBdecl}{\relax}
\BIBdecl

\bibitem{levinson2011towards}
J.~Levinson, J.~Askeland, J.~Becker, J.~Dolson, D.~Held, S.~Kammel, J.~Z.
  Kolter, D.~Langer, O.~Pink, V.~Pratt \emph{et~al.}, ``Towards fully
  autonomous driving: Systems and algorithms,'' in \emph{2011 IEEE intelligent
  vehicles symposium (IV)}.\hskip 1em plus 0.5em minus 0.4em\relax IEEE, 2011,
  pp. 163--168.

\bibitem{christianos2023planning}
F.~Christianos, P.~Karkus, B.~Ivanovic, S.~V. Albrecht, and M.~Pavone,
  ``Planning with occluded traffic agents using bi-level variational occlusion
  models,'' in \emph{2023 IEEE International Conference on Robotics and
  Automation (ICRA)}.\hskip 1em plus 0.5em minus 0.4em\relax IEEE, 2023, pp.
  5558--5565.

\bibitem{botros2022tunable}
A.~Botros and S.~L. Smith, ``Tunable trajectory planner using g 3 curves,''
  \emph{IEEE Transactions on Intelligent Vehicles}, vol.~7, no.~2, pp.
  273--285, 2022.

\bibitem{wilde2020improving}
N.~Wilde, A.~Blidaru, S.~L. Smith, and D.~Kuli{\'c}, ``Improving user
  specifications for robot behavior through active preference learning:
  Framework and evaluation,'' \emph{International Journal of Robotics Research
  (IJRR)}, vol.~39, no.~6, pp. 651--667, 2020.

\bibitem{biswas2022socnavbench}
A.~Biswas, A.~Wang, G.~Silvera, A.~Steinfeld, and H.~Admoni, ``Socnavbench: A
  grounded simulation testing framework for evaluating social navigation,''
  \emph{ACM Transactions on Human-Robot Interaction (THRI)}, vol.~11, no.~3,
  pp. 1--24, 2022.

\bibitem{zucker2013chomp}
M.~Zucker, N.~Ratliff, A.~D. Dragan, M.~Pivtoraiko, M.~Klingensmith, C.~M.
  Dellin, J.~A. Bagnell, and S.~S. Srinivasa, ``Chomp: Covariant hamiltonian
  optimization for motion planning,'' \emph{The International Journal of
  Robotics Research}, vol.~32, no. 9-10, pp. 1164--1193, 2013.

\bibitem{lu2020motion}
Z.~Lu, Z.~Liu, G.~J. Correa, and K.~Karydis, ``Motion planning for
  collision-resilient mobile robots in obstacle-cluttered unknown environments
  with risk reward trade-offs,'' in \emph{2020 IEEE/RSJ International
  Conference on Intelligent Robots and Systems (IROS)}.\hskip 1em plus 0.5em
  minus 0.4em\relax IEEE, 2020, pp. 7064--7070.

\bibitem{abbeel2004apprenticeship}
P.~Abbeel and A.~Y. Ng, ``Apprenticeship learning via inverse reinforcement
  learning,'' in \emph{Proceedings of the twenty-first international conference
  on Machine learning}.\hskip 1em plus 0.5em minus 0.4em\relax ACM, 2004, p.~1.

\bibitem{hadfield2016cooperative}
D.~Hadfield-Menell, S.~J. Russell, P.~Abbeel, and A.~Dragan, ``Cooperative
  inverse reinforcement learning,'' \emph{Advances in neural information
  processing systems}, vol.~29, 2016.

\bibitem{sadigh2017active}
D.~Sadigh, A.~D. Dragan, S.~S. Sastry, and S.~A. Seshia, ``Active
  preference-based learning of reward functions,'' in \emph{Proceedings of
  Robotics: Science and Systems (RSS)}, Jul. 2017.

\bibitem{biyik2019asking}
E.~Biyik, M.~Palan, N.~C. Landolfi, D.~P. Losey, and D.~Sadigh, ``Asking easy
  questions: A user-friendly approach to active reward learning,'' in
  \emph{Proceedings of the 3rd Conference on Robot Learning (CoRL)}, 2019.

\bibitem{habibian2022here}
S.~Habibian, A.~Jonnavittula, and D.~P. Losey, ``Here’s what i’ve learned:
  Asking questions that reveal reward learning,'' \emph{ACM Transactions on
  Human-Robot Interaction (THRI)}, vol.~11, no.~4, pp. 1--28, 2022.

\bibitem{branke2008multiobjective}
J.~Branke, K.~Deb, K.~Miettinen, and R.~Slowi{\'n}ski, \emph{Multiobjective
  optimization: Interactive and evolutionary approaches}.\hskip 1em plus 0.5em
  minus 0.4em\relax Springer Science \& Business Media, 2008, vol. 5252.

\bibitem{tunable_traj_planner_urban}
T.~Gu, J.~Atwood, C.~Dong, J.~M. Dolan, and J.-W. Lee, ``Tunable and stable
  real-time trajectory planning for urban autonomous driving,'' in \emph{2015
  IEEE/RSJ IROS}.\hskip 1em plus 0.5em minus 0.4em\relax IEEE, 2015, pp.
  250--256.

\bibitem{karkusdiffstack}
P.~Karkus, B.~Ivanovic, S.~Mannor, and M.~Pavone, ``Diffstack: A differentiable
  and modular control stack for autonomous vehicles,'' in \emph{Conference on
  Robot Learning}.\hskip 1em plus 0.5em minus 0.4em\relax PMLR, 2023, pp.
  2170--2180.

\bibitem{zuo2020mpc}
Z.~Zuo, X.~Yang, Z.~Li, Y.~Wang, Q.~Han, L.~Wang, and X.~Luo, ``Mpc-based
  cooperative control strategy of path planning and trajectory tracking for
  intelligent vehicles,'' \emph{IEEE Transactions on Intelligent Vehicles},
  vol.~6, no.~3, pp. 513--522, 2020.

\bibitem{hang2020integrated}
P.~Hang, C.~Lv, C.~Huang, J.~Cai, Z.~Hu, and Y.~Xing, ``An integrated framework
  of decision making and motion planning for autonomous vehicles considering
  social behaviors,'' \emph{IEEE transactions on vehicular technology},
  vol.~69, no.~12, pp. 14\,458--14\,469, 2020.

\bibitem{de2022risk}
P.~De~Petris, M.~Dharmadhikari, H.~Nguyen, and K.~Alexis, ``Risk-aware motion
  planning for collision-tolerant aerial robots subject to localization
  uncertainty,'' in \emph{2022 IEEE/RSJ International Conference on Intelligent
  Robots and Systems (IROS)}.\hskip 1em plus 0.5em minus 0.4em\relax IEEE,
  2022, pp. 4561--4568.

\bibitem{che2020efficient}
Y.~Che, A.~M. Okamura, and D.~Sadigh, ``Efficient and trustworthy social
  navigation via explicit and implicit robot--human communication,'' \emph{IEEE
  Transactions on Robotics}, vol.~36, no.~3, pp. 692--707, 2020.

\bibitem{brito2019model}
B.~Brito, B.~Floor, L.~Ferranti, and J.~Alonso-Mora, ``Model predictive
  contouring control for collision avoidance in unstructured dynamic
  environments,'' \emph{IEEE Robotics and Automation Letters}, vol.~4, no.~4,
  pp. 4459--4466, 2019.

\bibitem{marcucci2022motion}
T.~Marcucci, M.~Petersen, D.~von Wrangel, and R.~Tedrake, ``Motion planning
  around obstacles with convex optimization,'' \emph{arXiv preprint
  arXiv:2205.04422}, 2022.

\bibitem{luis2020online}
C.~E. Luis, M.~Vukosavljev, and A.~P. Schoellig, ``Online trajectory generation
  with distributed model predictive control for multi-robot motion planning,''
  \emph{IEEE Robotics and Automation Letters}, vol.~5, no.~2, pp. 604--611,
  2020.

\bibitem{cap2018multi}
M.~C{\'a}p and J.~Alonso-Mora, ``Multi-objective analysis of ridesharing in
  automated mobility-on-demand,'' in \emph{Robotics: Science and Systems
  (RSS)}, 2018.

\bibitem{wilde2021learning}
N.~Wilde, A.~Sadeghi, and S.~L. Smith, ``Learning submodular objectives for
  team environmental monitoring,'' \emph{IEEE Robotics and Automation Letters},
  vol.~7, no.~2, pp. 960--967, 2021.

\bibitem{wilde2020active}
N.~Wilde, D.~Kulić, and S.~L. Smith, ``Active preference learning using
  maximum regret,'' in \emph{2020 IEEE/RSJ International Conference on
  Intelligent Robots and Systems (IROS)}, 2020, pp. 10\,952--10\,959.

\bibitem{wilde2022learning}
N.~Wilde, E.~Biyik, D.~Sadigh, and S.~L. Smith, ``Learning reward functions
  from scale feedback,'' in \emph{Conference on Robot Learning}.\hskip 1em plus
  0.5em minus 0.4em\relax PMLR, 2022, pp. 353--362.

\bibitem{kim2005adaptive}
I.~Y. Kim and O.~L. De~Weck, ``Adaptive weighted-sum method for bi-objective
  optimization: Pareto front generation,'' \emph{Structural and
  multidisciplinary optimization}, vol.~29, pp. 149--158, 2005.

\bibitem{kim2006adaptive}
I.~Y. Kim and O.~De~Weck, ``Adaptive weighted sum method for multiobjective
  optimization: a new method for pareto front generation,'' \emph{Structural
  and multidisciplinary optimization}, vol.~31, no.~2, pp. 105--116, 2006.

\bibitem{thoma2023prioritizing}
A.~Thoma, K.~Thomessen, A.~Gardi, A.~Fisher, and C.~Braun, ``Prioritizing
  paths: An improved cost function for local path planning for uav in medical
  applications,'' in \emph{AIAC 2023: 20th Australian International Aerospace
  Congress: 20th Australian International Aerospace Congress}.\hskip 1em plus
  0.5em minus 0.4em\relax Engineers Australia Melbourne, 2023, pp. 324--329.

\bibitem{yi2015morrf}
D.~Yi, M.~A. Goodrich, and K.~D. Seppi, ``Morrf*: Sampling-based
  multi-objective motion planning,'' in \emph{Twenty-Fourth International Joint
  Conference on Artificial Intelligence}, 2015.

\bibitem{sakcak2021complete}
B.~Sakcak and S.~M. LaValle, ``Complete path planning that simultaneously
  optimizes length and clearance,'' in \emph{2021 IEEE International Conference
  on Robotics and Automation (ICRA)}.\hskip 1em plus 0.5em minus 0.4em\relax
  IEEE, 2021, pp. 10\,100--10\,106.

\bibitem{van2013scalarized}
K.~Van~Moffaert, M.~M. Drugan, and A.~Now{\'e}, ``Scalarized multi-objective
  reinforcement learning: Novel design techniques,'' in \emph{2013 IEEE
  Symposium on Adaptive Dynamic Programming and Reinforcement Learning
  (ADPRL)}.\hskip 1em plus 0.5em minus 0.4em\relax IEEE, 2013, pp. 191--199.

\bibitem{chen2019meta}
X.~Chen, A.~Ghadirzadeh, M.~Bj{\"o}rkman, and P.~Jensfelt, ``Meta-learning for
  multi-objective reinforcement learning,'' in \emph{2019 IEEE/RSJ
  International Conference on Intelligent Robots and Systems (IROS)}.\hskip 1em
  plus 0.5em minus 0.4em\relax IEEE, 2019, pp. 977--983.

\bibitem{ding2014hierarchical}
X.~D. Ding, B.~Englot, A.~Pinto, A.~Speranzon, and A.~Surana, ``Hierarchical
  multi-objective planning: From mission specifications to contingency
  management,'' in \emph{2014 IEEE international conference on robotics and
  automation (ICRA)}.\hskip 1em plus 0.5em minus 0.4em\relax IEEE, 2014, pp.
  3735--3742.

\bibitem{bopardikar2015multiobjective}
S.~D. Bopardikar, B.~Englot, and A.~Speranzon, ``Multiobjective path planning:
  Localization constraints and collision probability,'' \emph{IEEE Transactions
  on Robotics}, vol.~31, no.~3, pp. 562--577, 2015.

\bibitem{korte2011combinatorial}
B.~H. Korte, J.~Vygen, B.~Korte, and J.~Vygen, \emph{Combinatorial
  optimization}.\hskip 1em plus 0.5em minus 0.4em\relax Springer, 2011, vol.~1.

\bibitem{censor1977pareto}
Y.~Censor, ``Pareto optimality in multiobjective problems,'' \emph{Applied
  Mathematics and Optimization}, vol.~4, no.~1, pp. 41--59, 1977.

\bibitem{boyd2004convex}
S.~Boyd and L.~Vandenberghe, \emph{Convex optimization}.\hskip 1em plus 0.5em
  minus 0.4em\relax Cambridge university press, 2004.

\bibitem{bertsimas1997introduction}
D.~Bertsimas and J.~N. Tsitsiklis, \emph{Introduction to linear
  optimization}.\hskip 1em plus 0.5em minus 0.4em\relax Athena scientific
  Belmont, MA, 1997, vol.~6.

\bibitem{ehrgott2005multicriteria}
M.~Ehrgott, \emph{Multicriteria optimization}.\hskip 1em plus 0.5em minus
  0.4em\relax Springer Science \& Business Media, 2005, vol. 491.

\bibitem{gandibleux2006martins}
X.~Gandibleux, F.~Beugnies, and S.~Randriamasy, ``Martins' algorithm revisited
  for multi-objective shortest path problems with a maxmin cost function,''
  \emph{4OR}, vol.~4, no.~1, pp. 47--59, 2006.

\bibitem{koenig2004lifelong}
S.~Koenig, M.~Likhachev, and D.~Furcy, ``Lifelong planning a*,''
  \emph{Artificial Intelligence}, vol. 155, no. 1-2, pp. 93--146, 2004.

\bibitem{likhachev2003ara}
M.~Likhachev, G.~J. Gordon, and S.~Thrun, ``Ara*: Anytime a* with provable
  bounds on sub-optimality,'' \emph{Advances in neural information processing
  systems}, vol.~16, 2003.

\bibitem{zitzler1999multiobjective}
E.~Zitzler and L.~Thiele, ``Multiobjective evolutionary algorithms: a
  comparative case study and the strength pareto approach,'' \emph{IEEE
  transactions on Evolutionary Computation}, vol.~3, no.~4, pp. 257--271, 1999.

\bibitem{botros2022error}
A.~Botros, A.~Sadeghi, N.~Wilde, J.~Alonso-Mora, and S.~L. Smith,
  ``Error-bounded approximation of pareto fronts in robot planning problems,''
  in \emph{15th International Workshop on Algorithmic Foundations of Robotics
  (WAFR)}, 2022.

\end{thebibliography}

\end{document}